\theoremstyle{plain}
\newtheorem{property}{Property}
\newtheorem{theorem}{Theorem}
\newtheorem{proof}{Proof}
\newtheorem{comment}{Comment}
\newcommand{\comm}[1]{}
\DeclareMathOperator*{\argmax}{argmax}
\DeclareMathOperator*{\argmin}{argmin}
\title{Gradual Learning of Recurrent Neural Networks}
\author{
  Ziv Aharoni \\
  Department of Electrical Engineering\\
  Ben-Gurion University\\
  Beer-Sheva, Israel 8410501\\
  \texttt{zivah@post.bgu.ac.il} \\
  %% examples of more authors
   \And
   Gal Rattner \\
   Department of Electrical Engineering \\
   Ben-Gurion University\\
   Beer-Sheva, Israel 8410501\\
   \texttt{rattner@post.bgu.ac.il} \\
   \AND
   Haim Permuter \\
   Department of Electrical Engineering \\
   Ben-Gurion University\\
   Beer-Sheva, Israel 8410501\\
   \texttt{haimp@bgu.ac.il} \\
  %% \And
  %% Coauthor \\
  %% Affiliation \\
  %% Address \\
  %% \texttt{email} \\
  %% \And
  %% Coauthor \\
  %% Affiliation \\
  %% Address \\
  %% \texttt{email} \\
}
\begin{document}

\maketitle

\begin{abstract}
Recurrent Neural Networks (RNNs) achieve state-of-the-art results in many sequence-to-sequence modeling tasks. 
However, RNNs are difficult to train and tend to suffer from overfitting. 
Motivated by the Data Processing Inequality (DPI), we formulate the multi-layered network as a Markov chain, introducing a training method that comprises training the network gradually and using layer-wise gradient clipping.
We found that applying our methods, combined with previously introduced regularization and optimization methods, resulted in improvements in state-of-the-art architectures operating in language modeling tasks.

\end{abstract}

\section{Introduction}
Several forms of Recurrent Neural Network (RNN) architecture, such as LSTM \cite{HOCHREITER} and GRU \cite{cho2014enc}, have achieved state-of-the-art results in many sequential classification tasks \cite{CHO,Hypernetworks,Resnets,Dropin,InitAndMomentum,RHN} during the past few years. 
The number of stacked RNN layers, i.e. the network depth, has key importance in extending the ability of the architecture to express more complex dynamic systems \cite{Bianchini, Montufar}. However, training deeper networks poses problems that are yet to be solved.
\par Training a deep RNN network to exploit its performance potential can be a very difficult task. 
One of the problems in training deep networks (not recurrent necessarily) is the degradation problem, as studied in \cite{Resnets}. 
Moreover, RNNs are exposed to exponential vanishing or exploding gradients through the Back-Propagation-Through-Time (BPTT) algorithm. 
Many studies have attempted to address those problems by regularizing the network \cite{RNNBatchNorm,VariationalDropout,Zaremba}, using different layer initialization methods \cite{GreedyLayerwiseInit, hinton2006fast}, or using shortcut connections between layers \cite{Resnets,Dropin,RHN}.

 An additional problem in training a deep architecture is the \textit{covariate shift}. Previous studies \cite{RNNBatchNorm, IofeSzegedy, Shimodaira} have shown that the \textit{covariate shift} has a negative effect on the training process among deep neural architectures. Covariate shift is the change in a layer's input distribution during training, also manifested as \textit{internal covariate shift}, when a network internal layer input distribution is changed due to a shallower layer training process.

\par In this paper, we revisit the constructive/incremental approach that breaks the optimization process into several learning phases. Each learning phase includes training an increasingly deeper architecture than the previous ones. 
We show that given a specific architecture is capable of approximating a specific function/dynamic system, one can train it gradually to obtain the same performance.
In this way, one can train the network gradually, reducing the deleterious effects of degradation and backpropagation problems. Additionally, we suggest that by adjusting the gradient clipping norms in a layerwise manner, we are able to improve the network performance even further by reducing the covariate shift.

In order to evaluate our method's performance, we conducted experiments over the word-level Penn Treebank (PTB) and the Wikitext-2 datasets, which are commonly used in evaluating language modeling tasks from the field of natural language processing. 
We demonstrated that combining the Gradual Learning (GL) method and layerwise regularization adjustments can significantly improve the results over the traditional training method of LSTM.  

\section{Related Research}
% In the past years significant work has been done in the field of training deep Neural Networks.

In the past years, the idea of incremental/constructive learning has been explored in many various previous works.
\citet{Dropin} introduced the method of gradually adding layers to the network by a \textit{Dropin} layer. 
% A Dropin layer is inserted after two successive layers in the network, and controls the trade-off between passing the outputs of the first layer and the outputs of the second layer; initially, the outputs of the first layer are preferred, and then, as the learning process progresses, the algorithm switches to preferring the outputs of the second layer. 
% This process is done uniformly over all layers simultaneously, which constitutes its main difference from our proposed method. 
\citet{net2net} introduced a method for expanding a feedforward network using a function preserving transformation. 
% They proposed two methods, one for adding units to a specific layer, namely \textit{Net2WiderNet}, and one for stacking a layer over an existing one, namely \textit{Net2DeeperNet}. 
% These transforms are designed to preserve the functionality of the network after the expansion, but no such expansion was suggested for stacking a layer with sigmoid activations. 
% Moreover, they claim that there was no need for a change of the hyper-parameters of the optimization after the net2net transformation, which is the main difference to our methods.
\citet{TrainingAnalysingDRNN} studied training deep RNNs and their ability to process the data at multiple timescales. 
They proposed two configurations, one of which (DRNN-1O) comprises the same training scheme as that we present in our work. 
% Yet, their training scheme is not theoretically motivated.
% Yet, the theoretical motivation and the layerwise adjustments constitute the main difference from our work.
The main difference from the preceding works is that we do not intend to find the best incremental training scheme for training a RNN (or DNN). Instead, we seek to claim that using these methods could yield optimal models, and due to the difficulties of training deep networks, using these methods could yield improved results.

\citet{GreedyLayerwiseInit} proposed that a proper initialization of the network parameters by a layer-by-layer greedy unsupervised initialization process could recognize features in the input structure that would be valuable to the classifier at the network's output. This process is done by a reconstruction objective that encourages the network states to preserve all the information of the input. This constitutes the main difference from our method that seeks to preserve only the information of the inputs that is relevant for estimating the labels.

\citet{IofeSzegedy} and \citet{RNNBatchNorm} introduced and addressed the covariate shift problem in deep architectures, using batch normalization as a straightforward solution by fixing the layer activation distribution. 
While this method may be effective in overcoming the covariate shift phenomena and claimed to accelerate the training process, it poses restrictions such as large batch size. 
Our work approaches this issue from a different perspective and suggests a way to reduce the covariate shift by clipping the norm of the update step per layer while training the network gradually. 
Our proposed methods avoid the addition of excessive normalization layers and allow training using smaller batches, such that the training process is slower yet ends up with improved performance.

\section{Notation}
Let us represent a network with $l$ layers as a mapping from an input sequence $X \in \mathcal{X}$ to an output sequence $\hat{Y}_l \in \mathcal{Y}$ by 
\begin{equation}
\hat{Y}_l =  S_l \circ f_{l} \circ f_{l-1} \circ \dots \circ f_1 (X ; \Theta_l), \label{net_transform}\\
\end{equation}
where $f_2 \circ f_1$ represents the composition of $f_2$ over $f_1$. 
The term $\Theta_l = \{\theta_1, \dots, \theta_{l}, \theta_{S_l}\}$ denotes the network parameters, such that $\theta_k$ is the parameters of the $\text{k}^\text{th}$ layer, that is $f_k$. The term $\theta_{S_l}$ denotes the parameters of the softmax mapping $S_l$ that is applied to a network with $l$ layers. 
% The softmax mapping is given as follows:
% \begin{equation}
%     S_l(x;\theta_{S_l}) =  \frac{1}{\sum_{k=1}^{\left|\mathcal{Y}\right|} \exp x^T\theta_{S_l}^k}\begin{bmatrix}
%                         \exp x^T\theta_{S_l}^1 \\
%                         \exp x^T\theta_{S_l}^2 \\
%                         \vdots \\
%                         \exp x^T\theta_{S_l}^{\left|\mathcal{Y}\right|}
%             \end{bmatrix},
% \end{equation}
% where $\theta_S^k$ denotes the softmax parameters associated with the $\text{k}^\text{th}$ label\footnote[1]{For simplicity we restrict the discussion to discrete $\mathcal{Y}$}.
For convenience, we denote $F_l = f_l \circ f_{l-1} \circ \dots \circ f_1$.
We also define the $\text{l}^\text{th}$ layer \textit{state sequence} by $T_l =  F_{l} (X ; \theta^l)$,
where $\theta^l = \{\theta_1, \dots, \theta_{l}\}$, and we define the $\text{l}^\text{th}$ layer cost function by $J(\Theta_l) =  \mathrm{cost} (\hat{Y}_l, Y)$.
Next, we define the gradient vector with respect to $J(\Theta)$ by $\mathbf{g}= \frac{\partial}{\partial \Theta} J(\Theta)$, and the gradient vector of the $\text{k}^\text{th}$ layer parameters with respect to $J(\Theta)$ by $\mathbf{g}_{k} = \frac{\partial}{\partial \theta_k} J(\Theta)$.

\section{Gradual Learning}
In this section, we discuss a theoretical motivation that explains why a greedy training scheme is a reasonable approach for training  a deep  neural network. 
Then, we elaborate on the implementation of the method.

\subsection{Theoretical motivation}

The structure of a neural network comprises a sequential processing scheme of its inputs. This structure constitutes the Markov chain
\begin{equation}
    Y-X-T_1-T_2-\dots-T_L \label{nn-markov}.
\end{equation} 
This Markov chain has an elementary and well-known property that is used in our analysis, hence it is given without a proof.
\begin{property} \label{prop_markov}
Given a Markov chain $A_1-A_2-\dots-A_N$, for any ordered triplet $i,j,k \in \{1,\dots, N\}$, such that $i<j<k$, the Markov chain $A_i-A_j-A_k$ holds.
\end{property}
Hence, for every $T_l$ we can consider the Markov chain $Y-X-T_l$ that induces the conditional probabilities $p(y|x)$ and $p(y|t_l)$. 
Note that within the scope of training a neural network, $p(y,x,t_l)$ can be factorized by
\begin{align}
    p(y,x,t_l)  &= p(y|x)p(x)q_{\Theta_l}(t_l|x),
\end{align} 
where the terms $p(y|x), p(x)$ are induced entirely from the underlying distribution that generated the data, namely $P_{X,Y}$. Yet, the term $q_{\Theta_l}(t|x)$ is determined by the network parameters. Hence, any modification of the joint distribution $p(y,x,t_l)$ could be achieved only by modifying $q_{\Theta_l}(t|x)$. In particular, this means that $p(y|t_l)$ is also affected by the network through the term $q_{\Theta_l}(t|x)$, as shown below: 
\begin{align}
    p(y|t_l)  &=\frac{p(y,t_l)}{p(t_l)} \nonumber\\
            &=\frac{\sum_x p(x,y)q_{\Theta_l}(t_l|x)}{\sum_{x^\prime} p(x^\prime)q_{\Theta_l}(t_l|x^\prime)}.
\end{align} 
Next, we estimate $p(y|t_l)$ by the Maximum Likelihood Estimator (MLE) or, equivalently, by the negative log loss function.
Given a training set of $N$ examples $S = \left\{ (x_i,y_i)\right\}_{i=1}^N$ drawn i.i.d from an unknown distribution $P_{X,Y} = P_X P_{Y|X}$, we want to estimate $P_{Y|T_L}$ by a $L$-layered neural network that is parameterized by $\Theta_L$. Let us denote the estimator by $Q_{Y|T_L}^\Theta$, where $T_L = F_L(X)$. Thus, the MLE is given by
\begin{align}
    \Theta_L^\ast &= \argmax_\Theta \frac{1}{N}\sum_{i=1}^N \log \left[Q_{Y|T_L}^\Theta(y_i|t^L_i)\right] \\
            &= \argmin_\theta -\frac{1}{N}\sum_{i=1}^N \log \left[Q_{Y|T_L}^\Theta(y_i|t^L_i)\right], \label{mle_arg}
\end{align}
where $t^L_i$ denotes $F_L(x_i)$. Next, we optimize the maximum likelihood criteria. 
The following theorem is well-known, but for completeness we present a proof which is helpful for understanding the subsequent arguments.
\begin{theorem}[MLE and minimal negative log-likelihood] \label{opt_ll}
Given a training set of $N$ examples $S = \left\{ (x_i,y_i)\right\}_{i=1}^N$ drawn i.i.d from an unknown distribution $P_{X,Y} = P_X P_{Y|X}$, the MLE is given by $P_{Y|X}$ and the optimal value of the  criteria is $H(Y|X)$.
\end{theorem}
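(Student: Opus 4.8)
The plan is to reduce the empirical optimization in \eqref{mle_arg} to a population statement and then invoke the non-negativity of the Kullback--Leibler divergence. First I would observe that, writing $\hat{P}$ for the empirical joint distribution of the sample $S$, the objective in \eqref{mle_arg} equals $-\sum_{x,y}\hat{P}(x,y)\log Q_{Y|X}^\Theta(y|x)$, and that by the strong law of large numbers this converges almost surely, as $N\to\infty$, to the population risk $R(Q)=\mathbb{E}_{P_{X,Y}}\!\left[-\log Q_{Y|X}(Y|X)\right]$; it therefore suffices to minimize $R(Q)$ over all conditional distributions $Q_{Y|X}$. (The identical argument applies verbatim at finite $N$ with $P$ replaced by $\hat{P}$, in which case the minimizer found below is the empirical conditional $\hat{P}_{Y|X}$.)

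The key step is an add-and-subtract decomposition. Introducing the true conditional $P_{Y|X}$ inside the logarithm,
\begin{align}
R(Q) &= \mathbb{E}_{P_{X,Y}}\!\left[-\log P_{Y|X}(Y|X)\right] + \mathbb{E}_{P_{X,Y}}\!\left[\log\frac{P_{Y|X}(Y|X)}{Q_{Y|X}(Y|X)}\right] \nonumber \\
&= H(Y|X) + \mathbb{E}_{P_X}\!\left[D\!\left(P_{Y|X=x}\,\big\|\,Q_{Y|X=x}\right)\right],
\end{align}
where the first term is the conditional entropy and the second is the expected relative entropy between the true and the estimated conditionals. By Gibbs' inequality each divergence $D\!\left(P_{Y|X=x}\,\big\|\,Q_{Y|X=x}\right)$ is non-negative and vanishes if and only if $Q_{Y|X=x}=P_{Y|X=x}$, so the expectation over $x\sim P_X$ is non-negative and vanishes precisely when $Q_{Y|X}=P_{Y|X}$ on the support of $P_X$. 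Hence $R(Q)\ge H(Y|X)$ with equality attained exactly at $Q_{Y|X}=P_{Y|X}$, which is the claimed MLE, and the optimal value of the criterion is $H(Y|X)$.

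Alternatively, one could locate the stationary point directly: for fixed $x$, maximizing $\sum_y P_{Y|X}(y|x)\log Q(y|x)$ subject to $\sum_y Q(y|x)=1$ via a Lagrange multiplier yields $Q(y|x)\propto P_{Y|X}(y|x)$, hence $Q(\cdot|x)=P_{Y|X}(\cdot|x)$, and convexity of $-\log$ certifies global optimality. I would nonetheless prefer the divergence-based argument, since the non-negativity of $D(\cdot\,\|\,\cdot)$ supplies both the global bound and the equality case in one line. The only point demanding care is the passage from the finite sample to the population criterion — which is why I would make the law-of-large-numbers step explicit rather than gloss over the ``$N$ examples'' phrasing — together with the degenerate case $Q(y|x)=0<P_{Y|X}(y|x)$, where the loss is $+\infty$ and the inequality is trivial.
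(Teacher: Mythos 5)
Your proposal is correct and follows essentially the same route as the paper's own proof: pass from the empirical criterion to the population expectation via the law of large numbers, add and subtract $\log P_{Y|X}$ to decompose the risk as $H(Y|X)$ plus a Kullback--Leibler term, and conclude by non-negativity of the divergence with equality iff $Q_{Y|X}=P_{Y|X}$. Your write-up is in fact slightly more careful than the paper's (writing the divergence as $\mathbb{E}_{P_X}\bigl[D\bigl(P_{Y|X=x}\,\big\Vert\,Q_{Y|X=x}\bigr)\bigr]$, noting the support/degenerate cases), but the underlying argument is the same.
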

\begin{proof}
By the law of large numbers, the maximum likelihood criteria in \eqref{mle_arg} converges to
\begin{align}
    -\frac{1}{N}\sum_{i=1}^N \log \left[Q_{Y|T_L}^\Theta(y_i|t^L_i)\right] &\stackrel{N\rightarrow \infty}{\longrightarrow}   \mathbb{E}_{P_{X,Y}}\left[ -\log Q_{Y|T_L}^\Theta(Y|T^L) \right] \\
                   &=\mathbb{E}_{P_{X,Y}}\left[ -\log P_{Y|X}(Y|X) \right] + \mathbb{E}_{P_{X,Y}}\left[ \log \frac{P_{Y|X}(Y|X)}{Q_{Y|T_L}^\Theta(Y|T^L)} \right]\\
                   &=H(Y|X) + D_{KL}(P_{Y|X}\lVert Q_{Y|T_L}^\Theta) \label{mle},
\end{align}
where, $H(Y|X)$ denotes the conditional entropy of $Y$ given $X$ and $D_{KL}(P_{Y|X}\lVert Q_{Y|T_L}^\Theta)$ denotes the Kullback–Leibler (KL) divergence between $P_{Y|X}$ and $Q_{Y|T_L}^\Theta$. Due the non-negativity of the KL divergence, and since only $Q_{Y|T_L}^\Theta$ depends on $\Theta$, the negative log likelihood is minimized when $D_{KL}(P_{Y|X}\lVert Q_{Y|T_L}^\Theta)=0$, which happens if and only if $Q_{Y|T_L}^\Theta = P_{Y|X}$. 
Hence, the MLE is $P_{Y|X}$ and the optimal bound for the negative log-likelihood loss is $H(Y|X)$. $\quad \square$
\end{proof}
Thus, under the optimality conditions of Theorem \eqref{opt_ll} we obtain that the estimate for $p(y|t_L)$, that is $Q_{Y|T_L}^\Theta$, satisfies $p(y|t_L) = p(y|x)$.
We proceed by using the data processing inequality, which is is given as follows.
\begin{theorem}[Data-processing inequality {\cite[section 2.8]{cover2012elements}}]
For $X, Y, Z$ random variables, if the Markov relation $X- Y- Z$ holds, then $I(X;Y)\geq I(X;Z)$, where $I(X;Y)$ denotes the mutual-information between $X$ and $Y$.
\end{theorem}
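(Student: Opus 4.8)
The plan is to prove the data-processing inequality by expanding the mutual information between $X$ and the pair $(Y,Z)$ using the chain rule for mutual information in the two possible orders, and then exploiting the Markov relation to annihilate one of the resulting conditional terms.

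First I would invoke the chain rule for mutual information, which gives the two decompositions $I(X;Y,Z) = I(X;Z) + I(X;Y\mid Z)$ and $I(X;Y,Z) = I(X;Y) + I(X;Z\mid Y)$, where $I(A;B\mid C)$ denotes conditional mutual information, i.e. the expectation over $C$ of the mutual information between $A$ and $B$ under the conditional law. Equating the two expressions for $I(X;Y,Z)$ yields the identity $I(X;Y) + I(X;Z\mid Y) = I(X;Z) + I(X;Y\mid Z)$.

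Next I would use the hypothesis $X-Y-Z$. By definition of this Markov chain, conditioned on $Y$ the variables $X$ and $Z$ are independent, i.e. $p(x,z\mid y) = p(x\mid y)\,p(z\mid y)$. Writing the conditional mutual information as an expected Kullback--Leibler divergence, $I(X;Z\mid Y) = \mathbb{E}_{Y}\!\left[ D_{KL}\!\left( p(x,z\mid Y) \,\lVert\, p(x\mid Y)p(z\mid Y) \right) \right]$, this factorization forces $I(X;Z\mid Y) = 0$. Substituting into the identity above leaves $I(X;Y) = I(X;Z) + I(X;Y\mid Z)$.

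The conclusion then follows from the non-negativity of conditional mutual information: $I(X;Y\mid Z)$ is again an expectation of KL divergences, hence $I(X;Y\mid Z)\ge 0$ by the same non-negativity of the KL divergence already used in the proof of Theorem~\ref{opt_ll}. Therefore $I(X;Y)\ge I(X;Z)$, with equality precisely when $I(X;Y\mid Z)=0$, i.e. when $X-Z-Y$ is also a Markov chain. The only point requiring care is the bookkeeping with the chain rule and making sure the Markov hypothesis is applied in the correct direction — conditioning on the middle variable $Y$ — since conditioning on $Z$ instead does not make the corresponding term vanish; everything else reduces to facts already established in the excerpt.
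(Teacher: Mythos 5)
Your proof is correct: this is the standard chain-rule argument for the data-processing inequality — expand $I(X;Y,Z)$ both ways, kill $I(X;Z\mid Y)$ using the Markov condition on the middle variable, and invoke non-negativity of $I(X;Y\mid Z)$ — and it matches the proof in the cited reference \cite[Section~2.8]{cover2012elements}, the paper itself stating the theorem without proof. It is also essentially the same decomposition the paper uses in proving Theorem~\ref{thm:gl}, where $I(X,T_L;Y)$ is expanded in both orders and the vanishing conditional terms are identified, so your argument is consistent with the surrounding development and needs no changes.
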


By the Data Processing Inequality (DPI) and Property \eqref{prop_markov}, we can claim that for all $l=1,\dots,L$ the following statement hold
\begin{align}
    I(Y;X) &\geq I(Y;T_l) \label{transform_info}. 
\end{align}
This means that by processing the inputs $X$, one cannot increase the information between $X$ and $Y$.

Next, we show that under the conditions of Theorem \eqref{opt_ll}, Equation \eqref{transform_info} holds with equality.
\begin{theorem} \label{thm:gl}
    If $Q_{Y|T_L}^\Theta$ satisfies the optimality conditions of Theorem \eqref{opt_ll}, then $I(X;Y) = I(T_l;Y),\quad \forall l=1,\dots,L$.
\end{theorem}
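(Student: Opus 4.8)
The plan is to sandwich $I(Y;T_l)$ between $I(Y;X)$ from above and $I(Y;T_L)$ from below for every $l$, and then to use the optimality hypothesis to show $I(Y;T_L)=I(Y;X)$, which collapses the sandwich to a chain of equalities.

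First I would fix $l\in\{1,\dots,L\}$ and extract two Markov relations from the chain \eqref{nn-markov} using Property~\eqref{prop_markov}: the ordered triplet $(Y,X,T_l)$ yields $Y-X-T_l$, and the ordered triplet $(Y,T_l,T_L)$ yields $Y-T_l-T_L$. Applying the Data-Processing Inequality to each of these relations gives
\[
  I(Y;X)\;\ge\;I(Y;T_l)\;\ge\;I(Y;T_L),
\]
which for $l=L$ reduces to the first inequality alone.

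Second, I would convert the optimality hypothesis into the equality $I(Y;T_L)=I(Y;X)$. By Theorem~\eqref{opt_ll}, optimality of $Q_{Y|T_L}^\Theta$ means $D_{KL}(P_{Y|X}\lVert Q_{Y|T_L}^\Theta)=0$, i.e.\ $p(y|t_L)=p(y|x)$, as already identified right after that theorem. Since $T_L=F_L(X)$ is a deterministic function of $X$, I can write $H(Y|T_L)=-\mathbb{E}_{P_{X,Y}}\!\left[\log p(Y|T_L)\right]$ by transporting the average over $T_L$ back onto $X$, and then substitute $p(y|t_L)=p(y|x)$ to obtain $H(Y|T_L)=-\mathbb{E}_{P_{X,Y}}\!\left[\log p(Y|X)\right]=H(Y|X)$. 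Because $I(Y;T_L)=H(Y)-H(Y|T_L)$ and $I(Y;X)=H(Y)-H(Y|X)$, this yields $I(Y;T_L)=I(Y;X)$. (An equivalent route: the chain rule $I(Y;X)=I(Y;T_L)+I(Y;X|T_L)$ — valid because $I(Y;T_L|X)=0$ — reduces the claim to $I(Y;X|T_L)=0$, i.e.\ to the Markov relation $Y-T_L-X$; combined with $p(y|x,t_L)=p(y|x)$ coming from $Y-X-T_L$, this relation is precisely the optimality identity $p(y|x)=p(y|t_L)$.)

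Finally, I would combine the two steps: $I(Y;X)\ge I(Y;T_l)\ge I(Y;T_L)=I(Y;X)$ forces equality at every link, so $I(Y;T_l)=I(Y;X)$, and since $l\in\{1,\dots,L\}$ was arbitrary the theorem follows. The main obstacle I anticipate is the second step — passing from the (almost-sure, pointwise) identity $p(y|t_L)=p(y|x)$ to the entropy identity $H(Y|T_L)=H(Y|X)$: one must be careful that $T_L$ is a degenerate conditional of $X$ so that the change of variables inside the expectation is legitimate, and one must decide whether "optimality" is taken exactly ($D_{KL}=0$) or only in the large-sample limit, in which case the conclusion holds only in that limit. I would state and prove it in the exact form.
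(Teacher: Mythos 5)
Your proof is correct, and it reaches the key identity $I(X;Y)=I(T_L;Y)$ by a slightly different route than the paper. The paper, starting from the same pointwise consequence of Theorem \ref{opt_ll} that you use (namely $p(y|t_L)=p(y|x)$, combined with $p(y|x,t_L)=p(y|x)$ from $Y-X-T_L$), deduces the reverse Markov relation $Y-T_L-X$, and then expands $I(X,T_L;Y)$ two ways with the chain rule, using $I(T_L;Y|X)=0$ and $I(X;Y|T_L)=0$ to conclude $I(X;Y)=I(T_L;Y)$; your parenthetical ``equivalent route'' is exactly this argument. Your primary route instead computes $H(Y|T_L)=\mathbb{E}_{P_{X,Y}}[-\log p(Y|T_L)]=\mathbb{E}_{P_{X,Y}}[-\log p(Y|X)]=H(Y|X)$, exploiting that $T_L=F_L(X)$ is deterministic, and subtracts from $H(Y)$; this is marginally more elementary (no need to exhibit the reverse Markov chain), at the cost of hiding the structural fact the paper makes explicit, that at optimality $T_L$ becomes a sufficient statistic for $Y$ (double Markovity $Y-X-T_L$ and $Y-T_L-X$). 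Your final sandwich $I(Y;X)\ge I(Y;T_l)\ge I(Y;T_L)=I(Y;X)$, obtained from the two Markov triplets $Y-X-T_l$ and $Y-T_l-T_L$ via Property \ref{prop_markov}, is also a useful sharpening: the paper compresses this last step into ``according to the DPI we can claim directly,'' and your version spells out precisely which two DPI applications are needed. The caution you raise about passing from the almost-sure identity $p(y|t_L)=p(y|x)$ to $H(Y|T_L)=H(Y|X)$ is not an issue in the paper's setting (discrete variables, $T_L$ a deterministic function of $X$, optimality taken exactly as $D_{KL}=0$), and the paper itself takes that identity in the same exact sense.
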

\begin{proof}
By the Markov relation of the network \eqref{nn-markov} and Property \eqref{prop_markov} we can say that
\begin{equation}
    Y-X-T_L,\label{mark:y-x-t}
\end{equation}
and hence, by definition,
\begin{equation}
    p(y|x,t_L) = p(y|x). \label{y-x-t}
\end{equation} 
By the optimatilty condition of Theorem \eqref{opt_ll} we can say that 
\begin{equation}
    p(y|x) = p(y|t_L). \label{y|x=y|t}
\end{equation} 
Combining \eqref{y-x-t} and \eqref{y|x=y|t} we get
\begin{equation}
    p(y|x,t_L) = p(y|t_L), \label{y-t-x}
\end{equation} 
which shows by definition that the following Markov chain holds:
\begin{equation} \label{mark:y-t-x}
    Y-T_L-X.
\end{equation} 
According to the Markov relations \eqref{mark:y-x-t} and \eqref{mark:y-t-x} the following hold:
\begin{align}
    p(y,t_L|x) &= p(y|x)p(t_L|x) \\
    p(y,x|t_L) &= p(y|t_L)p(x|t_L). 
\end{align}
Therefore, by definition, $I(T_L;Y|X)=0$ and $I(X;Y|T_L)=0$. Hence, 
\begin{align}
    I(X,T_L;Y)  &= I(X;Y) + I(T_L;Y|X) \\
                &= I(T_L;Y) + I(X;Y|T_L),
\end{align}
where the equalities follow the chain rule for mutual information as given in \cite[Section~2.5.2]{cover2012elements}. 
Thus we concluded that $I(X;Y)=I(T_L;Y)$. According to the DPI we can claim directly that  $I(X;Y)=I(T_l;Y), \forall l=1,\dots,L$, as desired. $\quad \square$
\end{proof}

We showed that a necessary condition to achieve the MLE is that the network states, namely $\left\{T_l\right\}_{l=1}^L$, will satisfy $I(Y;X)=I(Y;T_l)$. 
This means that an optimal model with $L$ layers will satisfy that all the relevant information about $Y$ within $X$ must pass through every layer of the network, in particular layer 1. 
Due to the fact that shallow networks are easier to train, and since minimizing cross-entropy "drives" the network states to contain all the relevant information about $Y$ (Theorem \eqref{thm:gl}), we propose a greedy training scheme. 
In this way we can overcome the difficulties of training a deep architecture and simultaneously assure that an optimal model could be achieved.

\begin{comment}
    We cannot guarantee that after training a layer, say layer $l_0$, the negative log likelihood is minimized, either because of the model limitation or because the optimization yielded a local minimum and not a global one. 
    Hence, we cannot claim  that we maximized the mutual information between $Y$ and $T_{l_0}$. 
    But since this is a necessary condition to achieve the optimal condition of Theorem \eqref{opt_ll} (and not a sufficient condition), we assume that by the end of training, when the negative log-likelihood is small, $I(Y;T_{l_0})$ is maximized (or relatively close to it).
\end{comment}

\begin{figure}[!ht]
  \centering
  \psfrag{A}[c][][.65]{layer 1}
  \psfrag{B}[c][][.65]{layer 2}
  \psfrag{C}[c][][.65]{layer 3}
  \psfrag{D}[c][][.65]{softmax 1}
  \psfrag{E}[c][][.65]{softmax 2}             
  \psfrag{F}[c][][.65]{softmax 3}             
  \psfrag{G}[c][][.65]{cost 1}              
  \psfrag{H}[c][][.65]{cost 2}    
  \psfrag{I}[c][][.65]{cost 3}             
  \psfrag{a}[c][][.65]{$\textbf{x}$}    
  \psfrag{b}[c][][.65]{$\textbf{y}$}      
  \psfrag{c}[c][][.65]{}        
  \psfrag{d}[c][][.65]{}        
  \psfrag{e}[c][][.65]{}        
  \psfrag{f}[c][][.65]{}        
  \psfrag{g}[c][][.65]{}        
  \psfrag{h}[c][][.65]{}        
  \psfrag{i}[c][][.65]{initialize}        
  \psfrag{x}[c][][.8]{phase 1}          
  \psfrag{y}[c][][.8]{phase 2}          
  \psfrag{z}[c][][.8]{phase 3}                
  \includegraphics[scale=0.4]{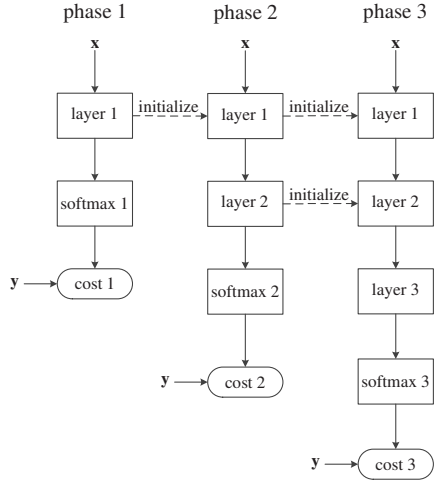}
  \caption{\textbf{Depiction of our training scheme for a 3 layered network.} At phase 1 we optimize the parameters of layer 1 according to cost 1. At phase 2, we add layer 2 to the network, and then we optimize the parameters of layers 1,2, when layer 1 is copied from phase 1 and layer 2 is initialized randomly.  At phase 3, we add layer 3 to the network, and then we optimize all of the network's parameters, when layers 1,2 are copied from phase 2 and layer 3 is initialized randomly.}
  \label{learning_scheme}
\end{figure}

\subsection{Implementation}

Now, motivated by the conclusions of the preceding section, we propose to break up the optimization process into $L$ phases, as the number of layers, optimizing $J(\Theta_l)$ sequentially as $l$ increases from 1 to L.
In each phase $l$, the network is optimized with respect to $\hat{Y}_l$ until convergence. 
Once the training of phase $l$ is done, phase $l+1$ is initiated with layers $1,\dots,l$ initialized with weights $\theta^l$ learned in the previous phase, and $\theta_{l+1}$ is initialized randomly. Initialization of the softmax layer's weights $\theta_{S_{l+1}}$ can be done either randomly or by inheritance of $\theta_{S_{l}}$ from the preceding training phase. An example for a training scheme is depicted in Figure \ref{learning_scheme}.

\section{Layer-wise Regularization Adjustments}
Driven by the Markov chain realtion of the neural network and the DPI, we conclude that exploiting the training procedure at any of the training phases will be most beneficial. Adjusting hyper-parameters for each training phase separately, such that it improves the minimization of the loss function is a necessary step to ensure the quality of our method implementation. However, adjusting the hyper-parameters might increase the robustness of the experiments and add variance to the experiments results as suggested in \cite{melis2017state}.

\subsection{Layer-wise Gradient Clipping (LWGC)} \label{lwgc}
Performing gradient clipping over the norm of the entire gradient vector $\mathbf{g}$, as suggested in \cite{pascanu}, may cause a contraction of the relatively small elements in the gradient vector on each update step, due to presence of much larger gradient elements in the vector that are much more dominant in the squared elements sum of the global norm $\left\Vert\mathbf{g}\right\Vert$. Due to this phenomenon, global gradient clipping, although proven to be useful and widely used in many architectures \cite{Regularizing_Optimizing_LSTM_LM, mos,Zaremba}, may have negative impact on the update step size for some weights elements in the network.
Clipping the gradients of each layer separately will eliminate the influence of gradient contraction between the different layers of the network.
 Global gradient norm clipping of vector $\mathbf{g}$ is formulated as $\hat{\mathbf{g}} := \frac{\mu}{\max(\mu,\left\Vert\mathbf{g}\right\Vert)}\mathbf{g}$, where $\mu$ is the fixed maximum gradient norm hyper-parameter.
  
%   Furthermore, motivation for layer-wise gradient clipping can be found in \textit{covariate shift} reduction, as depicted in Figure (\ref{covariate-shift}). Previous studies \cite{RNNBatchNorm, IofeSzegedy, Shimodaira} have shown that \textit{covariate shift} has a negative effect on the training process among deep neural architectures. Covariate shift is the change in a layer's input distribution during training, also manifested as \textit{internal covariate shift}, when a network internal layer input distribution is changed due to a shallower layer training process.
%  We seek to improve the training by reducing the internal covariate shift, using layer-wise gradient clipping (LWGC).

\begin{figure}[!ht]
  \centering
  \includegraphics[scale=0.35, center]{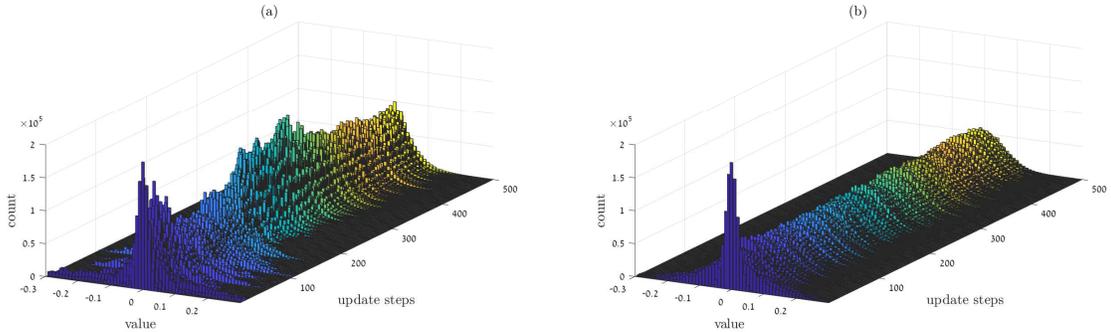}
  \caption{Depiction of the \textit{covariate shift} reduction at the initial stage of the training process, offering a comparison of the histograms of the third layer activations during the first 500 updates on training. Figure (a) shows the histogram of an AWD-MoS-LSTM network on traditional training. Figure (b) shows the histogram of the AWD-MoS-LSTM network trained using GL-LWGC.}
  \label{covariate-shift}
\end{figure}

When training layers gradually, at the beginning of each training phase the randomly initialized newly added layer tends to have a significantly larger gradient norm compared to the shallower pre-trained layers. Considering the differences in the layers' gradient norms, we suggest that treating each layer weights' gradient vector individually and clipping the gradients vector layerwise can reduce internal covariate shift significantly, as depicted in Figure \ref{covariate-shift}. 
In our experiments, we clipped each layer's gradient separately, increasing the clipping norm as the layer is deeper in the network. Moreover, we used a strictly low gradient clipping norm on the encoder matrix to restrict the entire network's covariate shift during training.

The formulation of the LWGC method for a network with $N$ layers is given by 
\begin{equation}
    \left[ \hat{\mathbf{g}}_1^T, \dots, \hat{\mathbf{g}}_L^T \right]^T := \left[ \frac{\mu_1}{\max(\mu_1,\left\Vert\mathbf{g}_1\right\Vert)}\mathbf{g}_1^T, \dots, \frac{\mu_N}{\max(\mu_N,\left\Vert\mathbf{g}_N\right\Vert)}\mathbf{g}_N^T \right]^T,
\end{equation} where $\mathbf{g}_i$ is the gradient vector w.r.t the weights of layer $i$, and $\mu_i$ is a fixed maximum gradient hyper-parameter of layer $i$.

\section{Experiments}
We present results on two datasets from the field of natural language processing, the word-level Penn Tree-Bank (PTB) and Wikitext-2 (WT2). We conducted most of the experiments on the PTB dataset and used the best configuration (except for minor modifications) to evaluate the WT2 dataset.

Following previous work \cite{melis2017state}, we established our experiments based on the pytorch implementation of \citet{mos}.
%We start by implementing our methods (GL, LWGC) based on the pytorch implementation of \cite{mos}.
We have added our methods (GL, LWGC) to the implementation, closely following the hyper-parameter settings, for fair comparison. 
This implementation includes variational dropout as proposed by \cite{VariationalDropout}, Weight Tying (WT) method as proposed in \cite{WT}, the regularization and optimization techniques as proposed in \cite{Regularizing_Optimizing_LSTM_LM} and the mixture of softmaxes as proposed by \cite{mos}. 
Dynamic evaluation \cite{dynamicEval} was applied on the trained model to evaluate the performance of our model compared to previous state-of-the-art results.

We conducted two models in our experiments, a \textit{reference} model and a \textit{GL-LWGC LSTM} model that was used to check the performance of our methods. The \textit{reference} model is 3 layered LSTM optimized and regularized with the properties described in \cite{mos}\footnote[3]{https://github.com/zihangdai/mos} with a slight change in the form of enlarging the third layer from 620 to 960 cells, in order to even the total number of parameters. The model was trained for 1000 epochs until the validation score stopped improving. Our reference model failed to improve the previous results presented by \cite{mos}. 

\begin{table}[!ht] 
  \caption{Single model test perplexity of the PTB dataset}
  \centering
  \begin{tabular}{lllll}
    \toprule
    \textbf{Model}	& \textbf{Size} 	& \textbf{Valid} & \textbf{Test} \\
    \toprule
    
    \citet{RHN} - Variational RHN + WT 		 						    & 23M & 67.9 & 65.4 \\
    \citet{NAS} - NAS & 25M & - & 64.0 \\
    \citet{melis2017state} - 2-layer skip connection LSTM & 24M &  60.9 & 58.3 \\
    \citet{Regularizing_Optimizing_LSTM_LM} - AWD-LSTM                  & 24M & 60.0 & 57.3 \\
    \citet{mos} - AWD-LSTM-MoS & 22M & 58.08 & 55.97\\
    \citet{mos} - AWD-LSTM-MoS + finetune & 22M & 56.54 & 54.44\\
    \midrule
    Ours - Reference Model         & 26M &  57.41       & 55.58 \\
    %Reference model - AWD-LSTM-MoS (ours)	& 26M &  &  \\
    Ours - 2-layers GL-LWGC-AWD-MoS-LSTM + finetune & 19M & 55.18 & 53.54\\
    Ours - GL-LWGC-AWD-MoS-LSTM & 26M & 54.57 & 52.95 \\
    Ours - GL-LWGC-AWD-MoS-LSTM + finetune & 26M & \textbf{54.24} & \textbf{52.57} \\
    \midrule
    \citet{dynamicEval} AWD-LSTM + dynamic evaluation\footnote[2] & 24M & 51.6 & 51.1 \\
    \citet{mos} AWD-LSTM-MoS + dynamic evaluation\footnote[2] & 22M & 48.33 & 47.69 \\
    Ours - GL-LWGC-AWD-MoS-LSTM + dynamic evaluation\footnote[2] & 26M & \textbf{46.64} & \textbf{46.34} \\
    \bottomrule
  \end{tabular}
  \label{res_table}
\end{table}

\begin{table}[!ht] 
  \caption{Single model perplexity of the WikiText-2 dataset}
  \centering
  \begin{tabular}{lllll}
    \toprule
    \textbf{Model}	& \textbf{Size} 	& \textbf{Valid} & \textbf{Test} \\
    \toprule
    \citet{melis2017state} - 2-layer skip connection LSTM & 24M &  69.1 & 65.9 \\
    \citet{Regularizing_Optimizing_LSTM_LM} - AWD-LSTM + finetune                 & 33M & 68.6 & 65.8 \\
    \citet{mos} - AWD-LSTM-MoS + finetune & 35M & 63.88 & 61.45\\
    \midrule
    Ours - GL-LWGC-AWD-MoS-LSTM & 35M & 63.59 & 61.27 \\
    Ours - GL-LWGC-AWD-MoS-LSTM + finetune & 38M & \textbf{62.79} & \textbf{60.54} \\
    \midrule
    \citet{dynamicEval} AWD-LSTM + dynamic evaluation\footnote[2] & 33M & 46.4 & 44.3 \\
    \citet{mos} AWD-LSTM-MoS + dynamic evaluation\footnote[2] & 35M & 42.41 & 40.68 \\
    Ours - GL-LWGC-AWD-MoS-LSTM + dynamic evaluation\footnote[2] & 38M & \textbf{42.19} & \textbf{40.46} \\
    \bottomrule
  \end{tabular}
  \label{wiki_res_table}
\end{table}
In our \textit{GL-LSTM} model we applied  the GL method with 3 training phases, at each we added a single layer of 960 cells and applied LWGC with an increasing maximum gradient norm at deeper layers. The LWGC max gradient norm for the LSTM and softmax layers was set in the range of 0.12-0.17, and for the embedding matrix a maximum gradient norm of 0.035-0.05 was set in order to lower the covariate shift along the training process. Other than that, regularization methods and hyper-parameter settings similar to those of the \textit{reference} model were used. Our GL-LSTM model overcame the state-of-the-art results with only two layers and 19M parameters, and further improved the state-of-the-art results with the third layer phase. Results of the \textit{reference} model and \textit{GL-LWGC LSTM} model are shown in Table \ref{res_table}. 

Experiments on the WT2 database were conducted with the same parameters as were used with the PTB model except for enlarging the embedding size to 300 units and changing the LSTM hidden size to 1050 units. Results of the WT2 model are shown in Table \ref{wiki_res_table}.

\section{Ablation Analysis}
In order to evaluate the benefit gained by each of our proposed methods, we measured the performance of our best-performing model removing one of the methods each time. Other than that, in order to provide a fair comparison with the previously suggested state-of-the-art methods \cite{mos} we evaluated a reference AWD-LSTM-MoS model with an enlarged third layer, to make the comparison with the same number of parameters as in our models. Table \ref{ablation_table} shows validation and test results for the ablated models.

\footnotetext[2]{Marking dynamic eval methods that update the model while evaluating, to distinguish from static evaluation models.}

\begin{table}[!ht] 
  \label{table:ablation}
  \centering
  \begin{tabular}{llll}
  \toprule
    & \quad \quad \quad PTB &  \\
    \textbf{Model} 	& \textbf{Valid} & \textbf{Test} \\
    \midrule
    \midrule
    GL-LWGC-AWD-MoS-LSTM    &  54.24     & 52.57 \\
    \toprule
    w/o fine-tuning            &  54.57    & 52.95 \\
    w/o LWGC                  &  56.32    & 54.09 \\
    w/o GL                    &  55.43    & 53.70 \\

    \bottomrule
    \hfill
  \end{tabular}
  \caption{Ablation analysis of the best-performing LSTM models over the Penn Treebank dataset. All models with 26M parameters.}
  \label{ablation_table}
\end{table}

Performance measurement of the GL ablated model, required setting the hyper-parameters ahead of initializing the training process. In this case we set the hyper-parameters as in the last phase of training of our best-performing GL-LWGC model, yet allowing a larger number of epochs to exploit the convergence. The ablated LWGC model was trained in three phases, each equivalent in length and hyper-parameter settings to the parallel phase in the best-performing GL-LWGC model, except for the global gradient norm clipping. While training the ablated LWGC model we set the maximum global gradient norm to be 
   $\mu_{global,l} = \sqrt{\mu_{L_1,l}^2 + \mu_{L_2,l}^2 + \dots + \mu_{L_N,l}^2}$,  
where $\mu_{L_i,l}$ is the maximum gradient norm for element $i$ in phase $l$ using LWGC, and $\mu_{global,l}$ is the maximum gradient norm for the non-LWGC case in phase $l$.

Each of the ablated GL and ablated LWGC models outperformed the previous state-of-the-art results, Yet combining both of the methods showed improved results.
The ablation analysis shows that the LWGC method has a stronger impact on the final results, while the GL method reduces the effectiveness of premature fine-tuning. Applying fine-tuning on the ablated LWGC model was not effective.  

We tested several hyper-parameters settings for our reference model, yet failed to improve the results presented by \cite{mos}. This result empowers our conclusions that the GL method decreases the degradation problem caused by increasing the depth or layer size of a model. 

\section{Conclusions}
We presented an effective technique to train RNNs. GL increases the network depth gradually as training progresses, and LWGC adjusts a gradient clipping norm in a layerwise manner at every learning phase of the training. Our techniques are implemented easily and do not involve increasing the amount of parameters of a network. We demonstrated the effectiveness of our techniques on the PTB and WikiText-2 datasets. We believe that our techniques would be useful for additional neural network architectures, such as GRUs and stacked RHNs, and in additional settings, such as in reinforcement learning.

\medskip

\small

\bibliographystyle{plainnat}
\bibliography{main}

\end{document}